\definecolor{CornflowerBlue}{rgb}{0.39, 0.58, 0.93}
\definecolor{chocolate(web)}{rgb}{0.82, 0.41, 0.12}
\title{A Unified Positive-Unlabeled Learning Framework for Document-Level Relation Extraction with Different Levels of Labeling}
\author{Ye Wang\textsuperscript{1}, Xinxin Liu\textsuperscript{1}, Wenxin Hu\textsuperscript{1}\thanks{\enspace Corresponding author.}, Tao Zhang\textsuperscript{2} \\
  \textsuperscript{1}East China Normal University, Shanghai, China\\
  \textsuperscript{2}Tsinghua University, Beijing, China\\
  \texttt{\{yewang, xxliu\}@stu.ecnu.edu.cn, wxhu@cc.ecnu.edu.cn}\\
  \texttt{tao-zhan20@mails.tsinghua.edu.cn}}
\begin{document}
\maketitle
\begin{abstract}
Document-level relation extraction (RE) aims to identify relations between entities across multiple sentences. Most previous methods focused on document-level RE under full supervision. However, in real-world scenario, it is expensive and difficult to completely label all relations in a document because the number of entity pairs in document-level RE grows quadratically with the number of entities. \enspace To solve the common incomplete labeling problem, we propose a unified positive-unlabeled learning framework $-$ \underline{s}hift and \underline{s}quared \underline{r}anking loss \underline{p}ositive-\underline{u}nlabeled (SSR-PU) learning.
We use positive-unlabeled (PU) learning on document-level RE for the first time. Considering that labeled data of a dataset may lead to prior shift of unlabeled data, we introduce a PU learning under prior shift of training data. Also, using none-class score as an adaptive threshold, we propose squared ranking loss and prove its Bayesian consistency with multi-label ranking metrics. \enspace Extensive experiments demonstrate that our method achieves an improvement of about 14 F1 points relative to the previous baseline with incomplete labeling. In addition, it outperforms previous state-of-the-art results under both fully supervised and extremely unlabeled settings as well.\footnote{Code and data are available at \url{https://github.com/www-Ye/SSR-PU}.}
\end{abstract}

\section{Introduction}

Relation extraction (RE) aims to identify the relations between two entities in a given text. It has rich applications in knowledge graph construction, question answering, and biomedical text understanding. Most of the previous work was to extract relations between entities in a single sentence \citep{miwa-bansal-2016-end, zhang-etal-2018-graph}. Recently, document-level RE aiming to identify the relations \noindent among various entity pairs expressed in multiple sentences has received increasing research attention \citep{yao-etal-2019-docred, Zhou_Huang_Ma_Huang_2021,xu-etal-2022-document}.

\begin{figure}
\centering
\includegraphics[width=0.5\textwidth]{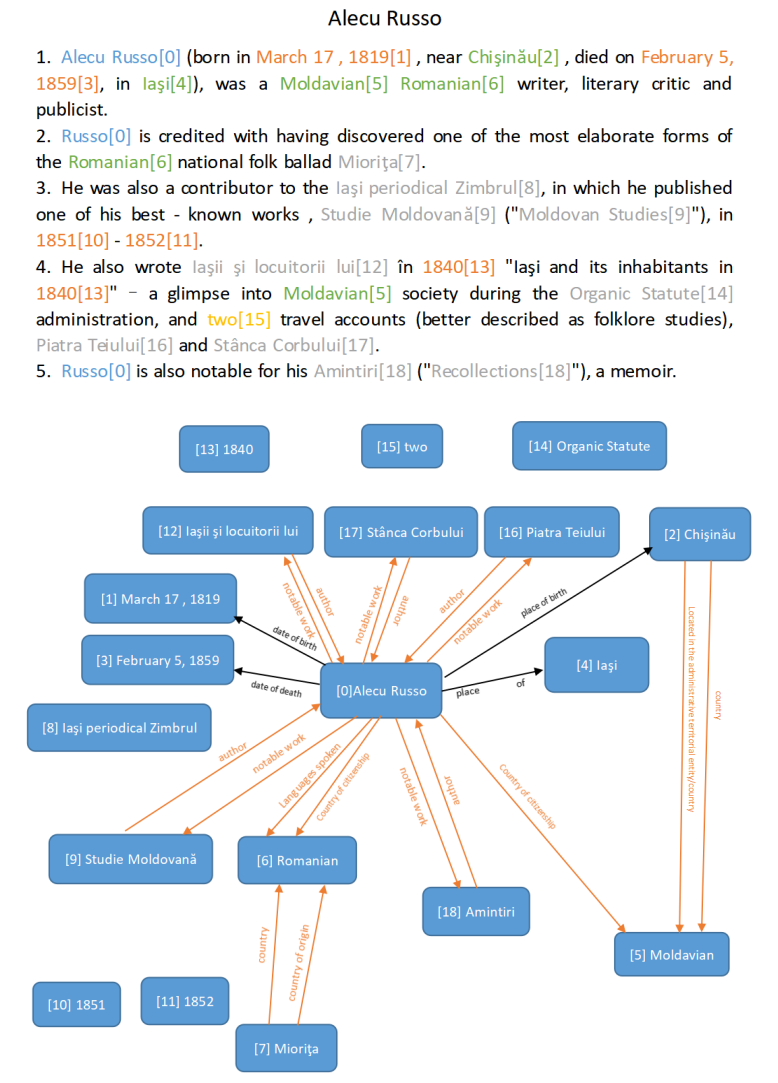} 
\caption{A case from DocRED. Entities are highlighted in different colors depending on their type. Black arrows indicate relations annotated with the original dataset, orange arrows indicate relations that are re-annotated by \citep{tan2022revisiting}.}
\label{fig1}
\end{figure}

Previous document-level RE methods mainly deal with fully supervised scenarios. However, in real-world scenarios, incomplete labeling is a common problem in document-level RE because the number of entity pairs grows quadratically with the number of entities. DocRED \citep{yao-etal-2019-docred} is a popular dataset for document-level RE. Recent studies \citep{huang-etal-2022-recommend, tan2022revisiting} found that DocRED, which annotates data with a \emph{recommend-revise} scheme, contains a large number of false negative samples, i.e., many positive samples being unlabeled. As shown in Figure \ref{fig1}, document \emph{Alecu Russo} contains a large number of unlabeled positive relations. Consequently, the models trained on this dataset tend to overfit in real scenarios and get lower recall. As a result, document-level RE with incomplete labeling has become an emergency need. 

To solve this problem, we propose a unified positive-unlabeled learning framework $-$ \underline{s}hift and \underline{s}quared \underline{r}anking loss \underline{p}ositive-\underline{u}nlabeled (SSR-PU) learning, which can be adapted to labeling under different levels. We use positive-unlabeled (PU) learning for the first time on the document-level RE task. Since document-level RE is a multi-label classification task, we apply a binary PU learning method for each class (one-vs-all), converting it to multi-label PU learning. In addition, according to our observations, a considerable portion of the relations in DocRED, a dataset annotated by \emph{recommend-revise} scheme, have already been annotated. This leads to the deviation between the prior distribution of the unlabeled data and the overall prior distribution. To address this problem, we introduce an adaptive PU learning under prior shift of training data that adjusts the model based on the estimated overall prior distribution and the labeled positive sample distribution to be similar to ordinary PN learning or ordinary PU learning. Here positive-negative (PN) learning means treating all unlabeled samples as negative samples.

Also, to distinguish between none-class and pre-defined classes, we propose a squared ranking loss for none-class ranking such that positive pre-defined labels are ranked higher than none-class label and negative pre-defined labels are ranked lower. This is an ideal multi-label surrogate loss metric, and we theoretically prove its Bayesian consistency with the multi-label ranking metric proposed by \citep{ijcai2022p630}. This loss function can be well adapted to PU learning.

We conduct extensive experiments on two multi-label document-level RE datasets with incomplete labeling, DocRED \citep{yao-etal-2019-docred} and ChemDisGene \citep{zhang-etal-2022-distant}, a newly proposed multi-labeled biomedical document-level RE dataset. Experimental results show that our method SSR-PU outperforms previous baseline that did not consider the labeling incompleteness phenomenon by about 14 F1 points. In addition, we perform fully supervised experiments, as well as experiments on an extremely unlabeled data that is newly constructed, in which the number of each relation type labeled in each document is limited to 1. Experiments under two complementary settings demonstrate the effectiveness of our method with different levels of labeling. The contributions of this paper are summarized as follows: 
\begin{itemize}
\item We propose a unified positive-unlabeled learning framework, SSR-PU, to adapt document-level RE with different levels of incomplete labeling.
\item We apply PU learning for the first time to the document-level RE task and introduce a PU learning under prior shift of training data that can reach a balance between ordinary PN learning and ordinary PU learning based on the estimated prior and labeling distribution.
\item We propose squared ranking loss, which effectively improves performance relative to other loss functions, and prove its Bayesian consistency with multi-label ranking metrics.
\item Our method achieves state-of-the-art results in a variety of settings and provides a robust baseline for document-level RE with incomplete labels.
\end{itemize}

\section{Related Work}

\textbf{Document-level relation extraction.} \enspace Previous generally effective methods for document-level RE are mainly graph-based models and transformer-based models. Graph-based models \citep{nan-etal-2020-reasoning,li-etal-2020-graph,zeng-etal-2020-double, zeng-etal-2021-sire, Xu_Chen_Zhao_2021} gather entity information for relational inference with graph neural networks, and transformer-based methods \citep{Zhou_Huang_Ma_Huang_2021, Xu_Wang_Lyu_Zhu_Mao_2021,ijcai2021p551,tan-etal-2022-document} implicitly capture long-range dependencies. Recently, \citep{huang-etal-2022-recommend, tan2022revisiting} found that a large number of positive relations remain unlabeled in document-level RE datasets, especially unpopular relations. However, the previous methods did not consider unlabeled data separately. They simply treated them all as negative samples, which led to a lower recall and a significant drop in performance in realistic scenarios.

\textbf{PU learning.} \enspace Positive-unlabeled (PU) learning \citep{10.1145/1401890.1401920,NIPS2014_35051070,pmlr-v37-plessis15,NIPS2017_7cce53cf} aims to learn a classifier from positive and unlabeled data. PU learning is a kind of semi-supervised learning but there is a fundamental difference between them: while semi-supervised learning requires labeled negative data, PU learning requires only labeled positive data. Many current PU learning methods rely on an overall prior estimate, while some recent studies \citep{charoenphakdee2019positive, nakajima2021positive} have noticed a prior shift between the training set and the test set. \enspace On the other hand, PU learning has been used in many NLP applications, e.g., text classification \citep{10.5555/1630659.1630746}, sentence embedding \citep{cao-etal-2021-pause}, named entity recognition \citep{peng-etal-2019-distantly, zhou-etal-2022-distantly}, knowledge graph completion \citep{ijcai2022p312} and sentence-level RE \citep{He_Chen_Wang_Zhang_Wang_Zhang_2020}. However, this method is rarely applied to the document-level RE task.

\textbf{Multi-label classification.} \enspace Multi-label classification is a widely investigated problem, and here we focus on the loss function. Binary cross entropy (BCE) is the most popular multi-label loss, reducing the multi-label problem to a number of independent binary (one-vs-all) classification tasks. Recently, \citep{hui2020evaluation} have found that squared loss can also achieve better results in classification tasks. Another common multi-label loss function is pairwise ranking loss, which transforms multi-label learning into a ranking problem via pairwise (one-vs-one) comparison \citep{furnkranz2008multilabel, Li_2017_CVPR}. \enspace 
For multi-label PU learning, \citep{Kanehira_2016_CVPR} treated it as a multi-label PU ranking problem, and \citep{9724274} applied PU learning to multi-label common vulnerabilities and exposure classification by using one-vs-all strategy. For document-level RE task, \citep{ijcai2022p630} proposed a none-class ranking multi-label metric. This multi-label metric has not yet been applied to PU learning.

\section{Methodology}

In this section, we introduce the details of our method \underline{s}hift and \underline{s}quared \underline{r}anking loss \underline{p}ositive-\underline{u}nlabeled (SSR-PU) learning for document-level RE with incomplete labeling. Firstly, we introduce the definition of positive-unlabeled learning for document-level RE. Next, we present the PU learning under prior shift of training data. Finally, squared ranking loss using the none-class score as an adaptive threshold is proposed.

\subsection{Positive-unlabeled learning for document-level RE}

Document-level RE can be viewed as a multi-label classification task, where each entity pair is an instance and the associated relations are label samples. Previous supervised learning methods only treated unlabeled relations as negative samples, which may lead to low recall in the presence of a large number of false negatives. To address this problem, we adopt PU learning \citep{NIPS2014_35051070,pmlr-v37-plessis15} for each class.

Let $\mathcal{X}$ be an instance space and $\mathcal{Y}=\{-1,+1\}^{K}$ be a label space, where $K$ is the number of pre-defined classes. An instance $\boldsymbol{x} \in \mathcal{X}$ is associated with a subset of labels, identified by a binary vector $\boldsymbol{y} \in \mathcal{Y}=\left(y_{1}, \ldots, y_{K}\right)$, where $y_{i}=+1$ if the $i$-th label is positive for $\boldsymbol{x}$, and $y_{i}=-1$ otherwise. A score function is defined as $\boldsymbol{f}(\boldsymbol{x})=(f_{1}(\boldsymbol{x}),f_{2}(\boldsymbol{x}),...,f_{K}(\boldsymbol{x}))$. In the following we use $f_{i}$ instead, to omit the dependency on x. 

For $i$-th class, assume that the data follow an unknown probability distribution with density $p(\boldsymbol{x}, y_{i})$, $p_{\mathrm{P}_{i}}=p(\boldsymbol{x} \mid y_{i}=+1)$ as the positive marginal, $p_{\mathrm{N}_{i}}=p(\boldsymbol{x} \mid y_{i}=-1)$ as the negative marginal, and $p_{i}(\boldsymbol{x})$ as the marginal. In positive-negative (PN) learning, the goal is to minimize the expected classification risk:
\begin{equation}
\begin{aligned}\label{eq1}
R_{\mathrm{PN}}(f)=\sum_{i=1}^{K}\mathbb{E}_{\boldsymbol{x},y_{i} \sim p(\boldsymbol{x}, y_{i})}[\ell(f_{i}, y_{i})],
\end{aligned}
\end{equation}
Here, Eq.\ref{eq1} can be calculated by equivalently using the sum of the errors of positive and negative samples:
\begin{equation}
\begin{aligned}\label{eq2}
R_{\mathrm{PN}}(f)&=\sum_{i=1}^{K}(\pi_{i} \mathbb{E}_{\mathrm{P}_{i}}[\ell(f_{i}, +1)]\\&+(1-\pi_{i})\mathbb{E}_{\mathrm{N}_{i}}[\ell(f_{i}, -1)]),
\end{aligned}
\end{equation}
where $\pi_{i}=p(y_{i}=+1)$ and $(1-\pi_{i})=(1-p(y_{i}=+1))=p(y_{i}=-1)$ is the positive and negative prior of the $i$-th class. $\mathbb{E}_{\mathrm{P}_{i}}[\cdot]=\mathbb{E}_{\boldsymbol{x} \sim p(\boldsymbol{x} \mid y_{i}=+1)}[\cdot]$, $\mathbb{E}_{\mathrm{N}_{i}}[\cdot]=\mathbb{E}_{\boldsymbol{x} \sim p(\boldsymbol{x} \mid y_{i}=-1)}[\cdot]$ and the loss function is represented by $\ell$. Rewriting Eq.\ref{eq2} into a form that uses the data for approximation, we get:

\begin{equation}
\begin{aligned}\label{eq3}
\widehat{R}_{\mathrm{PN}}(f)=&\sum_{i=1}^{K}(\frac{\pi_{i}}{n_{\mathrm{P}_{i}}} \sum_{j=1}^{n_{\mathrm{P}_{i}}} \ell(f_{i}( \boldsymbol{x}_{j}^{\mathrm{P}_{i}}), +1) \\
&+\frac{(1-\pi_{i})}{n_{\mathrm{N}_{i}}} \sum_{j=1}^{n_{\mathrm{N}_{i}}} \ell(f_{i}( \boldsymbol{x}_{j}^{\mathrm{N}_{i}}), -1)),
\end{aligned}
\end{equation}
where $\boldsymbol{x}_{j}^{\mathrm{P}_{i}}$ and $\boldsymbol{x}_{j}^{\mathrm{N}_{i}}$ denote cases that the $j$-th sample of class $i$ is positive or negative. $n_{\mathrm{P}_{i}}$ and $n_{\mathrm{N}_{i}}$ are the number of positive and negative samples of class $i$, respectively.

In positive-unlabeled (PU) learning, due to the absence of negative samples, we cannot  estimate $\mathbb{E}_{\mathrm{N}_{i}}[\cdot]$ from the data. Following \citep{NIPS2014_35051070}, PU learning assumes that unlabeled data can reflect the true overall distribution, that is, $p_{\mathrm{U}_{i}}(\boldsymbol{x})=p_{i}(\boldsymbol{x})$. The expected classification risk formulation can be defined as:
\begin{equation}
\begin{aligned}\label{eq4}
R_{\mathrm{PU}}(f)&=\sum_{i=1}^{K}(\pi_{i} \mathbb{E}_{\mathrm{P}_{i}}[\ell(f_{i}, +1)]+\\&\mathbb{E}_{\mathrm{U}_{i}}[\ell(f_{i}, -1)]-\pi_{i} \mathbb{E}_{\mathrm{P}_{i}}[\ell(f_{i}, -1)]),
\end{aligned}
\end{equation}
Here $\mathbb{E}_{\mathrm{U}_{i}}[\cdot]=\mathbb{E}_{\boldsymbol{x} \sim p_{i}(\boldsymbol{x})}[\cdot]$ and $\mathbb{E}_{\mathrm{U}_{i}}[\ell(f_{i}, -1)]$ $-\pi_{i} \mathbb{E}_{\mathrm{P}_{i}}[\ell(f_{i}, -1)]$ can alternatively represent $(1-\pi_{i})\mathbb{E}_{\mathrm{N}_{i}}[\ell(f_{i}, -1)]$ because $p_{i}(\boldsymbol{x})=
\pi_{i}p_{\mathrm{P}_{i}}(\boldsymbol{x})+(1-\pi_{i})p_{\mathrm{N}_{i}}(\boldsymbol{x})$.

By rewriting Eq.\ref{eq4} in a form that can be approximated using the data, we get the following:
\begin{equation}
\begin{aligned}\label{eq5}
\widehat{R}_{\mathrm{PU}}(f)=&\sum_{i=1}^{K}(\frac{\pi_{i}}{n_{\mathrm{P}_{i}}} \sum_{j=1}^{n_{\mathrm{P}_{i}}} \ell(f_{i}( \boldsymbol{x}_{j}^{\mathrm{P}_{i}}), +1) \\
&+[\frac{1}{n_{\mathrm{U}_{i}}} \sum_{j=1}^{n_{\mathrm{U}_{i}}} \ell(f_{i}( \boldsymbol{x}_{j}^{\mathrm{U}_{i}}), -1)\\&-\frac{\pi_{i}}{n_{\mathrm{P}_{i}}} \sum_{j=1}^{n_{\mathrm{P}_{i}}} \ell(f_{i}( \boldsymbol{x}_{j}^{\mathrm{P}_{i}}), -1)]),
\end{aligned}
\end{equation}
where $\boldsymbol{x}_{j}^{\mathrm{U}_{i}}$ denote cases that the $j$-th sample is unlabeled as class $i$ and $n_{\mathrm{U}_{i}}$ is the number of samples unlabeled as class $i$.

However, the second term in Eq.\ref{eq5} can be negative and can be prone to overfitting when using a highly flexible model. Thus, a non-negative risk estimator \citep{NIPS2017_7cce53cf} is proposed to alleviate the overfitting problem:
\begin{equation}
\begin{aligned}\label{eq7}
\widehat{R}_{\mathrm{PU}}&(f)=\sum_{i=1}^{K}(\frac{\pi_{i}}{n_{\mathrm{P}_{i}}} \sum_{j=1}^{n_{\mathrm{P}_{i}}} \ell(f_{i}( \boldsymbol{x}_{j}^{\mathrm{P}_{i}}), +1)+ \\
&\mathrm{max}(0,[\frac{1}{n_{\mathrm{U}_{i}}} \sum_{j=1}^{n_{\mathrm{U}_{i}}} \ell(f_{i}( \boldsymbol{x}_{j}^{\mathrm{U}_{i}}), -1)\\&-\frac{\pi_{i}}{n_{\mathrm{P}_{i}}} \sum_{j=1}^{n_{\mathrm{P}_{i}}} \ell(f_{i}( \boldsymbol{x}_{j}^{\mathrm{P}_{i}}), -1)])).
\end{aligned}
\end{equation}

For $\ell$, we use the convex function squared loss:
\begin{equation}
\begin{aligned}\label{eq8}
\ell(f_{i}, y_{i})=\frac{1}{4}(y_{i}f_{i}-1)^{2},
\end{aligned}
\end{equation}
and we compare the performance of using squared loss and log-sigmoid loss functions in Section \ref{c4}. The latter is a convex loss function commonly used in classification.

In addition, to solve the heavy class imbalance problem, we multiply $\gamma_{i}=(\frac{1-\pi_{i}}{\pi_{i}})^{0.5}$ before positive risk estimations as the class weight.

\subsection{Class prior shift of training data}
\begin{figure}
\centering
\includegraphics[width=0.4\textwidth]{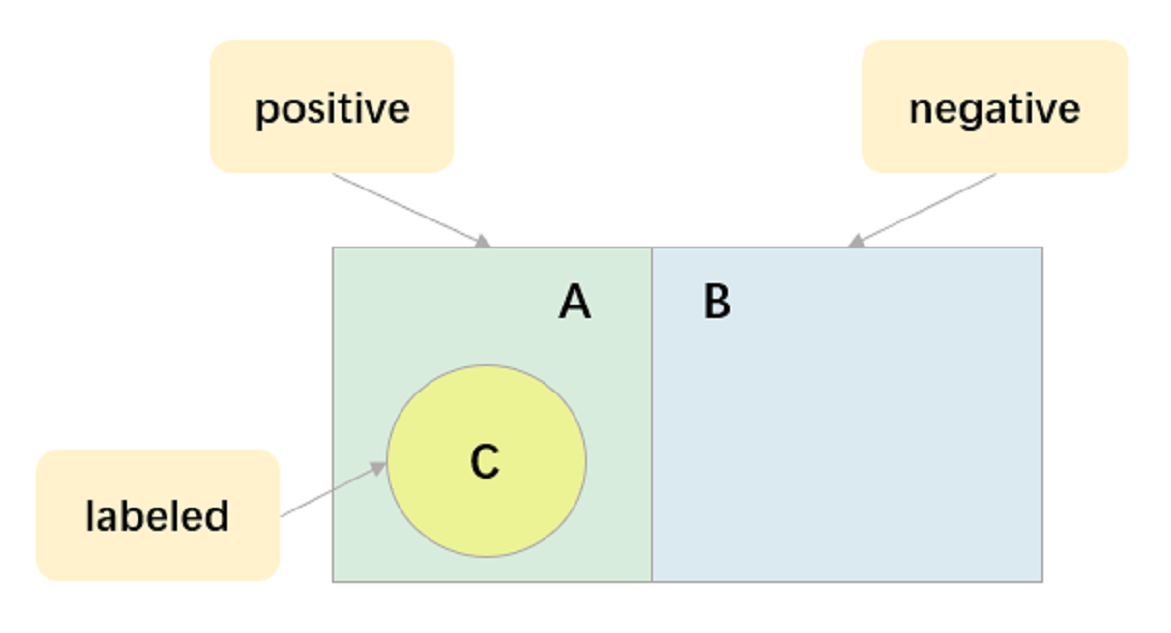} 
\caption{Positive sample distribution shift after labeled, i.e., $p(A \mid \overline{C}) \neq p(A)$}
\label{fig2}
\end{figure}
Ordinary PU learning requires an assumption that the overall distribution needs to be the same as the distribution of the unlabeled data. In contrast, with the document-level RE dataset constructed by a \emph{recommend-revise} scheme, many relations are probably already annotated, especially the common ones. This leads to a prior shift in the unlabeled data of the training set. When this assumption is broken, ordinary PU learning will yield a biased result. To address this problem, inspired by the method \citep{charoenphakdee2019positive} for handling a prior shift between the test set and the training set, we introduce the PU learning under prior shift of training data.

For each class, assume that the original prior $\pi_{i}=p(y_{i}=+1)$. We set $\pi_{labeled,i}=p(s_{i}=+1)$ and $(1-\pi_{labeled,i})=(1-p(s_{i}=+1))=p(s_{i}=-1)$ where $s_{i}=+1$ or $s_{i}=-1$ mean that the $i$-th class is labeled or unlabeled, respectively. As shown in Figure \ref{fig2}, the conditional probability of a positive sample under unlabeled data is different from the probability of an overall positive sample. The conditional probability of a positive sample under unlabeled data is:
\begin{equation}
\begin{aligned}\label{eq9}
p(y_{i}=1 \mid s_{i}=-1)=\frac{p(y_{i}=1, s_{i}=-1)}{p(s_{i}=-1)},
\end{aligned}
\end{equation}
where $p(y_{i}=1, s_{i}=-1)=\pi_{i}-\pi_{labeled,i}$, we can obtain the prior of positive samples in the new unlabeled data after labeling as $\pi_{u,i}=p(y_{i}=1 \mid s_{i}=-1)=\frac{\pi_{i}-\pi_{labeled,i}}{1-\pi_{labeled,i}}$. 

For document-level RE, the goal is to minimize the following misclassification risk for the original distribution of the training data:
\begin{equation}
\begin{aligned}\label{eq10}
R_{\mathrm{ori}}(f)&=\sum_{i=1}^{K}(\pi_{i} \mathbb{E}_{\mathrm{P}_{i}}[\ell(f_{i}, +1)]\\&+(1-\pi_{i})\mathbb{E}_{\mathrm{N}_{i}}[\ell(f_{i}, -1)]).
\end{aligned}
\end{equation}

We can express $R_{\mathrm{ori}}(f)$ using the expectation of positive and unlabeled data by the following theorem.
\newtheorem{thm}{\bf Theorem}
\begin{thm}\label{thm1}
The misclassification risk $R_{\mathrm{ori}}(f)$ can be equivalently expressed as
\begin{equation}
\begin{aligned}\label{eq11}
R_{\mathrm{S-PU}}(f)&=\sum_{i=1}^{K}(\pi_{i} \mathbb{E}_{\mathrm{P}_{i}}[ \ell(f_{i}, +1)] \\
&+\frac{1-\pi_{i}}{1-\pi_{u,i}} \mathbb{E}_{\mathrm{U}_{i}}[\ell(f_{i}, -1)] \\
&-\frac{\pi_{u,i}-\pi_{u,i} \pi_{i}}{1-\pi_{u,i}}\mathbb{E}_{\mathrm{P}_{i}}[\ell(f_{i}, -1)]).
\end{aligned}
\end{equation}
\end{thm}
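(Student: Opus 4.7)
The plan is to reduce the theorem to the standard unbiased-PU identity, but applied to the \emph{shifted} unlabeled prior $\pi_{u,i}$ rather than to $\pi_i$, and then rescale the negative term to match the original prior $1-\pi_i$ appearing in $R_{\mathrm{ori}}(f)$.

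First I would unpack what the unlabeled marginal looks like under the prior shift. Writing $p_{\mathrm{U}_i}(\boldsymbol{x})=p(\boldsymbol{x}\mid s_i=-1)$ and conditioning on $y_i$, together with the selected-completely-at-random style assumption already used implicitly in the paper (so that $p(\boldsymbol{x}\mid y_i=+1,s_i=-1)=p_{\mathrm{P}_i}(\boldsymbol{x})$ and $p(\boldsymbol{x}\mid y_i=-1,s_i=-1)=p_{\mathrm{N}_i}(\boldsymbol{x})$, since every negative is unlabeled), one obtains the mixture decomposition
\begin{equation*}
p_{\mathrm{U}_i}(\boldsymbol{x})=\pi_{u,i}\,p_{\mathrm{P}_i}(\boldsymbol{x})+(1-\pi_{u,i})\,p_{\mathrm{N}_i}(\boldsymbol{x}),
\end{equation*}
with $\pi_{u,i}=(\pi_i-\pi_{labeled,i})/(1-\pi_{labeled,i})$ as derived just before the theorem.

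Next I would take expectations of $\ell(f_i,-1)$ on both sides of this mixture identity, giving
\begin{equation*}
\mathbb{E}_{\mathrm{U}_i}[\ell(f_i,-1)]=\pi_{u,i}\,\mathbb{E}_{\mathrm{P}_i}[\ell(f_i,-1)]+(1-\pi_{u,i})\,\mathbb{E}_{\mathrm{N}_i}[\ell(f_i,-1)],
\end{equation*}
and then solve for $\mathbb{E}_{\mathrm{N}_i}[\ell(f_i,-1)]$, which is the quantity that is not directly estimable from PU data. Substituting this expression into the negative term $(1-\pi_i)\mathbb{E}_{\mathrm{N}_i}[\ell(f_i,-1)]$ of $R_{\mathrm{ori}}(f)$ in Eq.\ref{eq10} produces a coefficient $(1-\pi_i)/(1-\pi_{u,i})$ in front of the $\mathbb{E}_{\mathrm{U}_i}$ term and a coefficient $-(1-\pi_i)\pi_{u,i}/(1-\pi_{u,i})=-(\pi_{u,i}-\pi_{u,i}\pi_i)/(1-\pi_{u,i})$ in front of $\mathbb{E}_{\mathrm{P}_i}[\ell(f_i,-1)]$, which are exactly the coefficients appearing in $R_{\mathrm{S\text{-}PU}}(f)$. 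Summing over $i=1,\dots,K$ and keeping the positive term $\pi_i\mathbb{E}_{\mathrm{P}_i}[\ell(f_i,+1)]$ untouched yields Eq.\ref{eq11}.

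The main conceptual step is the first one: justifying the mixture decomposition of $p_{\mathrm{U}_i}$ with weight $\pi_{u,i}$. Once that is in place the rest is bookkeeping. I would therefore spend most of the proof making the labeling mechanism explicit, namely that a sample being \emph{unlabeled} does not change its class-conditional density (only its mixture weight changes), so that $p(\boldsymbol{x}\mid y_i,s_i=-1)=p(\boldsymbol{x}\mid y_i)$; it is this SCAR-type assumption, implicit in the definition of $\pi_{u,i}$, that allows the identity $p_{\mathrm{U}_i}=\pi_{u,i}p_{\mathrm{P}_i}+(1-\pi_{u,i})p_{\mathrm{N}_i}$ and hence the whole rewriting to go through. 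The algebraic substitution into $R_{\mathrm{ori}}(f)$ is then purely mechanical.
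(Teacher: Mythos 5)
Your proposal is correct and follows essentially the same route as the paper's own proof: both rest on the mixture identity $p_{\mathrm{U}_i}=\pi_{u,i}p_{\mathrm{P}_i}+(1-\pi_{u,i})p_{\mathrm{N}_i}$, solve it for the inestimable negative expectation, and substitute into the negative term of $R_{\mathrm{ori}}(f)$ to obtain the stated coefficients. The only difference is that you make explicit the SCAR-type assumption underlying the mixture decomposition, which the paper simply asserts as a fact; this is a welcome clarification but not a different argument.
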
 
\begin{proof}
Proof appears in Appendix \ref{a0}.
\end{proof}

As a result, we can obtain the non-negative risk estimator \citep{NIPS2017_7cce53cf} under class prior shift of training data as follows:
\begin{equation}
\begin{aligned}\label{eq13}
&\widehat{R}_{\mathrm{S-PU}}(f)=\sum_{i=1}^{K}( \frac{1}{n_{\mathrm{P}_{i}}}\pi_{i} \sum_{j=1}^{n_{\mathrm{P}_{i}}}\ell(f_{i}(\boldsymbol{x}_{j}^{\mathrm{P}_{i}}), +1) \\&+\mathrm{max}(0, [\frac{1}{n_{\mathrm{U}_{i}}} \frac{1-\pi_{i}}{1-\pi_{u,i}} \sum_{j=1}^{n_{\mathrm{U}_{i}}} \ell(f_{i}(\boldsymbol{x}_{j}^{\mathrm{U}_{i}}), -1)\\&-\frac{1}{n_{\mathrm{P}_{i}}}\frac{\pi_{u,i}-\pi_{u,i} \pi_{i}}{1-\pi_{u,i}} \sum_{j=1}^{n_{\mathrm{P}_{i}}}\ell(f_{i}(\boldsymbol{x}_{j}^{\mathrm{P}_{i}}), -1)])).
\end{aligned}
\end{equation}

We can observe that PN learning and PU learning are special cases of this function. When $\pi_{u,i}=0$, this equation reduces to the form of ordinary PN learning, and when $\pi_{u,i}=\pi_{i}$, this equation reduces to the form of ordinary PU learning.

\subsection{Squared ranking loss}

To better measure the performance of document-level RE, \citep{ijcai2022p630} proposed a new multi-label performance measure:
\begin{equation}
\begin{aligned}\label{eq14}
L_{\mathrm{NA}}&(\boldsymbol{f}, \boldsymbol{y})= \sum_{i=1}^{K}(\llbracket y_{i}>0 \rrbracket \llbracket f_{i}<f_{0} \rrbracket\\
&+\llbracket y_{i} \leq 0 \rrbracket \llbracket f_{i}>f_{0} \rrbracket+\frac{1}{2} \llbracket f_{i}=f_{0} \rrbracket),
\end{aligned}
\end{equation}
where positive pre-defined labels should be ranked higher than the none-class label and negative ones should be ranked below. $\llbracket \cdot \rrbracket$ is an indicator function that takes the value of 1 when the conditions in the parentheses are met, otherwise 0.

However, it is difficult to optimize the above equation directly. Thus, we propose the squared ranking surrogate loss by rewriting Eq.\ref{eq8} as:
\begin{equation}
\begin{aligned}\label{eq15}
\ell_{\mathrm{SR}}(f_{i}, y_{i})=
\frac{1}{4}(y_{i}(f_{i}-f_{0})-margin)^{2},
\end{aligned}
\end{equation}
where $margin$ is a hyper-parameter and $f_{0}$ is the none-class score, when $f_{i}$ is greater than $f_{0}$ the label exists, and otherwise not.

Next we prove the Bayesian consistency of $\ell_{\mathrm{SR}}$ with the multi-label ranking metric $L_{\mathrm{NA}}$ when $margin \neq 0$. Given an instance x, let $\Delta_{i} = \mathrm{P}(y_{i}=1 \mid x)$ be the marginal probability when the $i$-th label is positive, the Bayes optimal score function $\boldsymbol{f}_{\mathrm{NA}}^{*}$ that minimizes the multi-label risk $\mathbb{E}[L_{\mathrm{NA}}(\mathrm{P}, \boldsymbol{f}) \mid \boldsymbol{x}]$ is given by:
\begin{equation}
\begin{aligned}\label{eq16}
\boldsymbol{f}_{\mathrm{NA}}^{*} \in \{\boldsymbol{f} : &f_{i}>f_{0} \; if \; \Delta_{i}>\frac{1}{2}, \\&and \; f_{i}<f_{0} \; if \; \Delta_{i}<\frac{1}{2}\}.
\end{aligned}
\end{equation}

The next theory guarantees that the classifier obtained by minimizing the surrogate loss $\ell_{\mathrm{SR}}$ converges to the classifier with the lowest multi-label risk, thus making it possible to achieve a better classification performance w.r.t. corresponding to the multi-label performance metric.

\begin{thm}\label{thm2}
$\ell_{\mathrm{SR}}$ (Eq.\ref{eq15}) is Bayes consistent w.r.t. $L_{\mathrm{NA}}$ (Eq.\ref{eq14}) when $margin \neq 0$.
\end{thm}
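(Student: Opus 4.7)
The plan is to prove Bayes consistency by directly computing the conditional minimizer of the expected surrogate loss and matching its sign pattern against the Bayes-optimal characterization in Eq.\ref{eq16}. Since $L_{\mathrm{NA}}$ decomposes as a sum over the $K$ labels with each term depending only on the sign of $f_i - f_0$, and $\ell_{\mathrm{SR}}$ also decomposes over labels with a shared none-class score $f_0$, it suffices to analyze a single label $i$ with abbreviation $g_i := f_i - f_0$.

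First I would write out the pointwise conditional risk, for a fixed instance $\boldsymbol{x}$ with $\Delta_i = \mathrm{P}(y_i=1\mid \boldsymbol{x})$,
\begin{equation*}
\mathbb{E}[\ell_{\mathrm{SR}}(f_i,y_i)\mid \boldsymbol{x}] = \tfrac{1}{4}\Delta_i(g_i-margin)^2 + \tfrac{1}{4}(1-\Delta_i)(g_i+margin)^2.
\end{equation*}
This is a strictly convex quadratic in $g_i$ (independent of $f_0$ individually, only via the difference). Setting the derivative to zero gives the unique conditional minimizer $g_i^{*} = (2\Delta_i-1)\,margin$, i.e., $f_i^{*}-f_0^{*} = (2\Delta_i-1)\,margin$.

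Next I would read off the sign pattern. Taking $margin>0$ (the case $margin<0$ is symmetric up to a global sign flip and would simply require redefining the convention of which side of $f_0$ indicates positive; the statement $margin\neq 0$ ensures $g_i^{*}$ does not collapse to $0$ and therefore carries nontrivial sign information), we get
\begin{equation*}
\Delta_i > \tfrac{1}{2} \Longrightarrow g_i^{*}>0,\qquad \Delta_i < \tfrac{1}{2} \Longrightarrow g_i^{*}<0,\qquad \Delta_i=\tfrac{1}{2} \Longrightarrow g_i^{*}=0.
\end{equation*}
Comparing with Eq.\ref{eq16} shows that any $\boldsymbol{f}^{*}$ minimizing the expected $\ell_{\mathrm{SR}}$-risk lies in the Bayes-optimal set $\boldsymbol{f}_{\mathrm{NA}}^{*}$ for $L_{\mathrm{NA}}$. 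To finish, I would invoke the standard consistency argument: for any sequence $\boldsymbol{f}^{(n)}$ whose expected $\ell_{\mathrm{SR}}$-risk converges to the Bayes $\ell_{\mathrm{SR}}$-risk, the strict convexity of the quadratic forces $g_i^{(n)}\to g_i^{*}$ in probability on the set $\{\Delta_i\neq 1/2\}$, hence $\mathrm{sign}(g_i^{(n)})$ agrees with the Bayes decision almost everywhere, and summing the $K$ per-label contributions of $L_{\mathrm{NA}}$ yields convergence of the $L_{\mathrm{NA}}$-risk to its Bayes value.

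The only delicate point I anticipate is the treatment of the tie set $\{\Delta_i = 1/2\}$: there the unique conditional minimizer gives $g_i^{*}=0$, so $f_i^{*}=f_0^{*}$, which contributes $\tfrac{1}{2}$ to $L_{\mathrm{NA}}$; but this matches precisely the Bayes risk on that set, since ranking is genuinely ambiguous there. So ties do not break consistency, they are absorbed by the $\tfrac{1}{2}\llbracket f_i=f_0\rrbracket$ term in Eq.\ref{eq14}. The rest of the argument is a routine excess-risk manipulation, so the real content of the proof is the single calculus step that yields $g_i^{*}=(2\Delta_i-1)\,margin$, together with the observation that $margin\neq 0$ is exactly the condition that makes this signed minimizer sign-informative.
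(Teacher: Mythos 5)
Your proposal is correct and follows essentially the same route as the paper's proof in Appendix~\ref{a1}: compute the conditional surrogate risk, use convexity and differentiability to obtain the unique minimizer $f_i^{*}-f_0^{*}=(2\Delta_i-1)\,margin$, and match its sign against the Bayes-optimal set in Eq.~\ref{eq16}. Your additional remarks on the sign convention for $margin$, the tie set $\{\Delta_i=\tfrac{1}{2}\}$, and the excess-risk limiting argument are points the paper leaves implicit, but they do not change the substance of the argument.
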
 
\begin{proof}
Proof appears in Appendix \ref{a1}.
\end{proof}

As a supplement, we likewise compare the log-sigmoid ranking loss performance in Section \ref{c4}.

\begin{table}
\centering
\begin{tabular}{lcccc}
\hline \multirow{2}{*}{ Dataset } & \multicolumn{2}{c}{DocRED} & \multicolumn{2}{c}{ChemDisGene} \\
 & train & test & train & test \\
\hline 
\# docs & 3,053 & 500 & 76,942 & 523 \\
\# rels & \multicolumn{2}{c}{96} & \multicolumn{2}{c}{14} \\
Avg \# ents & $19.5$ & $19.6$ & $7.5$ & $10.0$ \\
Avg \# rels & $12.5$ & $34.9$ & $2.1$ & $7.2$ \\
\hline
\end{tabular}
\caption{ Statistics of Document-level RE Datasets}
\label{tab:accents1}
\end{table}

\begin{table*}
\centering
\begin{tabular}{lcccc}
\hline \textbf{Model} & \textbf{Ign F1} & \textbf{F1} & \textbf{P} & \textbf{R} \\
\hline BiLSTM$^{\ast}$ & $32.57 \pm 0.22$ & $32.86 \pm 0.22$ & $77.04 \pm 1.01$ & $20.89 \pm 0.17$ \\
GAIN+BERT$_{Base}^{\ast}$ & $45.57 \pm 1.36$ & $45.82 \pm 1.38$ & $88.11 \pm 1.07$ & $30.98 \pm 1.36$ \\
DocuNET+RoBERTa$_{Large}^{\ast}$ & $45.88 \pm 0.33$ & $45.99 \pm 0.33$ & $94.16 \pm 0.32$ & $30.42 \pm 0.29$ \\
\hline
\hline ATLOP+BERT$_{Base}^{\ast}$ & $43.12 \pm 0.24$ & $43.25 \pm 0.25$ & $\mathbf{92.49} \pm \mathbf{0.33}$ & $28.23 \pm 0.23$ \\
PN+ATLOP+BERT$_{Base}$ & $51.11 \pm 0.49$ & $51.68 \pm 0.40$ & $77.55 \pm 3.10$ & $38.79 \pm 0.49$ \\
SR-PN+ATLOP+BERT$_{Base}$ & $52.70 \pm 0.28$ & $53.10 \pm 0.26$ & $83.76 \pm 0.49$ & $38.87 \pm 0.23$ \\
PU+ATLOP+BERT$_{Base}$ & $51.80 \pm 1.11$ & $53.14 \pm 1.01$ & $58.81 \pm 2.41$ & $48.15 \pm 0.14$ \\
SR-PU+ATLOP+BERT$_{Base}$ & $53.87 \pm 0.27$ & $55.06 \pm 0.25$ & $63.42 \pm 0.64$ & $\mathbf{48.66} \pm \mathbf{0.11}$ \\
S-PU+ATLOP+BERT$_{Base}$ & $53.36 \pm 1.22$ & $54.44 \pm 1.12$ & $65.95 \pm 2.84$ & $46.38 \pm 0.22$ \\
SSR-PU+ATLOP+BERT$_{Base}$ & $\mathbf{55.21} \pm \mathbf{0.12}$ & $\mathbf{56.14} \pm \mathbf{0.12}$ & $70.42 \pm 0.18$ & $46.67 \pm 0.14$ \\
\hline ATLOP+RoBERTa$_{Large}^{\ast}$ & $45.09 \pm 0.26$ & $45.19 \pm 0.27$ & $\mathbf{94.75} \pm \mathbf{0.25}$ & $29.67 \pm 0.24$ \\
PN+ATLOP+RoBERTa$_{Large}$ & $54.21 \pm 0.34$ & $54.47 \pm 0.35$ & $89.22 \pm 0.36$ & $39.20 \pm 0.41$ \\
SR-PN+ATLOP+RoBERTa$_{Large}$ & $56.06 \pm 0.21$ & $56.39 \pm 0.23$ & $87.47 \pm 0.60$ & $41.61 \pm 0.39$ \\
PU+ATLOP+RoBERTa$_{Large}$ & $56.97 \pm 0.47$ & $58.04 \pm 0.43$ & $67.39 \pm 1.22$ & $50.98 \pm 0.39$ \\
SR-PU+ATLOP+RoBERTa$_{Large}$ & $57.64 \pm 0.25$ & $58.77 \pm 0.26$ & $66.39 \pm 0.47$ & $\mathbf{52.72} \pm \mathbf{0.44}$ \\
S-PU+ATLOP+RoBERTa$_{Large}$ & $58.19 \pm 0.24$ & $58.95 \pm 0.25$ & $75.68 \pm 0.36$ & $48.29 \pm 0.40$ \\
SSR-PU+ATLOP+RoBERTa$_{Large}$ & $\mathbf{58.68} \pm \mathbf{0.43}$ & $\mathbf{59.50} \pm \mathbf{0.45}$ & $74.21 \pm 0.53$ & $49.67 \pm 0.77$ \\
\hline
\end{tabular}
\caption{\label{table1}
Results on Re-DocRED revised test set. Results with $\ast$ are based on our implementation.
}
\end{table*}

\section{Experiments}
In this section, we evaluate our method on two multi-label document-level RE datasets with incomplete labeling. We also demonstrate the effectiveness of our method with different levels of labeling.

\subsection{Experimental Setups}

\textbf{Datasets.} \enspace \emph{DocRED} \citep{yao-etal-2019-docred} is a large-scale document-level RE dataset with 96 pre-defined relations constructed by a \emph{recommend-revise} scheme from Wikipedia. \citep{tan2022revisiting} observed a large number of false negatives in the annotation of DocRED and provided a high-quality revised version, Re-DocRED. In our experiments, we use the incompletely labeled DocRED original training set for training and the revised test set for testing. \enspace \emph{ChemDisGene} \citep{zhang-etal-2022-distant} is a newly proposed biomedical multi-label document-level RE dataset. This corpus is automatically derived from CTD database \citep{davis2021comparative} by distantly supervised method and has 523 abstracts labeled by domain experts as an additional \emph{All relationships} test set. We use the distantly supervised training set for training and the \emph{All relationships} test set for testing. \enspace The average number of relations per document in the test set on both two datasets is much larger than the average number of relations in the training set, which indicates the incomplete labeling phenomenon in the training set, with a large number of false negatives present. The statistics of the two datasets are listed in Table~\ref{tab:accents1}.

\textbf{Implementation details.} \enspace For each dataset, we use ATLOP \citep{Zhou_Huang_Ma_Huang_2021} as the encoding model for the representation learning of relations. Further, we apply cased $\mathrm{BERT}_{Base}$ \citep{devlin-etal-2019-bert} and $\mathrm{RoBERTa}_{Large}$ \citep{liu2019roberta} for DocRED and $\mathrm{PubmedBert}$ \citep{10.1145/3458754} for ChemDisGene. We use Huggingface's Transformers \citep{wolf-etal-2020-transformers} to implement all the models and AdamW \citep{loshchilov2018decoupled} as the optimizer, and apply a linear warmup \citep{goyal2017accurate} at the first 6\% steps followed by a linear decay to 0. For DocRED, we set the learning rates for $\mathrm{BERT}_{Base}$ and $\mathrm{RoBERTa}_{Large}$ settings to 5e-5 and 3e-5, respectively, in the same way as ATLOP. For ChemDisGene, the learning rate is set to 2e-5. The batch size (number of documents per batch) is set to 4 and 8 for two datasets, respectively. During our experiment, we set $\pi_{i}=3\pi_{labeled,i}$ and $margin=0.25$. To evaluate the efficacy of our methods in realistic settings, we do not use any fully labeled validation or test sets in any stage of the training process. The training stopping criteria are set as follows: 30 epochs for both two dataset. We report the performance of the final model instead of the best checkpoint. All experiments are conducted with 1 Tesla A100-40G GPU.

\begin{table*}
\centering
\begin{tabular}{lccc}
\hline \textbf{Model} & \textbf{F1} & \textbf{P} & \textbf{R} \\
\hline BRAN$^{\dagger}$ & $32.5$ & $41.8$ & $26.6$ \\
PubmedBert$^{\dagger}$ & $42.1$ & $64.3$ & $31.3$ \\
BRAN+PubmedBert$^{\dagger}$ & $43.8$ & $70.9$ & $31.6$ \\
\hline
\hline ATLOP+PubmedBert$^{\ast}$ & $42.73 \pm 0.36$ & $\mathbf{76.17} \pm \mathbf{0.54}$ & $29.70 \pm 0.36$ \\
PN+ATLOP+PubmedBert & $44.25 \pm 0.24$ & $73.46 \pm 0.95$ & $31.67 \pm 0.16$ \\
SR-PN+ATLOP+PubmedBert & $46.56 \pm 0.35$ & $69.84 \pm 0.54$ & $34.93 \pm 0.40$ \\
PU+ATLOP+PubmedBert & $44.60 \pm 0.70$ & $46.56 \pm 1.17$ & $42.80 \pm 0.35$ \\
SR-PU+ATLOP+PubmedBert & $45.86 \pm 0.38$ & $46.91 \pm 0.79$ & $\mathbf{44.86} \pm \mathbf{0.37}$ \\
S-PU+ATLOP+PubmedBert & $46.73 \pm 0.49$ & $53.95 \pm 1.14$ & $41.23 \pm 0.36$ \\
SSR-PU+ATLOP+PubmedBert & $\mathbf{48.56} \pm \mathbf{0.23}$ & $54.27 \pm 0.40$ & $43.93 \pm 0.32$ \\
\hline
\end{tabular}
\caption{\label{table4}
Results on ChemDisGene \emph{All relationships} test set. Results with $\dagger$ are reported from \citep{zhang-etal-2022-distant}. Results with $\ast$ are based on our implementation.
}
\end{table*}

\textbf{Baseline.} \enspace We re-implemented the existing fully supervised methods BiLSTM \citep{yao-etal-2019-docred}, GAIN \citep{zeng-etal-2020-double}, DocuNET \citep{ijcai2021p551} and ATLOP \citep{Zhou_Huang_Ma_Huang_2021} as the baseline models for DocRED in this new setup, where for GAIN and BiLSTM we use a fixed threshold of 0.5 and all methods take the final result of the model instead of the best checkpoint. For ChemDisGene, we used BRAN \citep{verga-etal-2018-simultaneously}, PubmedBert \citep{10.1145/3458754} and PubmedBert + BRAN mentioned in \citep{ zhang-etal-2022-distant} as the baseline models, and ATLOP is re-implemented as a supplementary baseline.

\textbf{Evaluation metric.} \enspace For DocRED, we use the micro F1 (F1), micro ignore F1 (Ign F1), precision (P) and recall (R) as the evaluation metrics to evaluate the overall performance of a model. Ign F1 measures the F1 score excluding the relations shared by the training and test set. \enspace For ChemDisGene, we use  micro F1 (F1), precision (P) and recall (R) as the evaluation metrics.

\subsection{Main Results}
In this subsection, we present the results of comparison of PN learning (PN), squared ranking loss PN learning (SR-PN), PU learning (PU), squared ranking loss PU learning (SR-PU), PU learning under prior shift of training data (S-PU) and SSR-PU. All methods use the same encoder and different loss functions. For each method, we use the same hyper-parameter settings and report the mean and standard deviation on the test set by conducting 5 runs with different random seeds (62, 63, 64, 65, 66).

\textbf{Results on DocRED.} \enspace As shown in Table \ref{table1}, our SSR-PU method achieves a state-of-the-art F1 and Ign F1 in both $\mathrm{BERT}_{Base}$ and $\mathrm{RoBERTa}_{Large}$ settings and outperforms the original ATLOP by 13.58 and 14.52 F1 points, respectively. Meanwhile, consistent with the observation in the paper \citep{huang-etal-2022-recommend}, existing document-level RE methods under full supervision have a significant performance degradation in the incompletely labeled scenario.

The original ATLOP method has the highest precision (P) but low recall (R), which implies that supervised learning methods that simply treat unlabeled data as negative samples lack the generalization ability to extract instances of relations that are systematically missed in the dataset. PN learning uses an estimated prior, but will yield a biased result because there are still positive samples in the unlabeled data. While PU learning uses both unlabeled and labeled data to better estimate the expectation of negative samples, which results in a higher recall rate. In addition, ordinary PU methods without prior shift overestimate the content of positive samples in unlabeled data, which means that the model will tend to identify more samples as positive, i.e., higher recall, but also leads to more false-positive prediction results, i.e., lower precision. In contrast, the S-PU method with prior shift effectively mitigates this phenomenon by bringing the positive samples estimated by the model in the unlabeled data closer to their true distribution. For example, in experiments under the $\mathrm{BERT}_{Base}$ setting, there is a small decrease in recall of less than 2 percentage points, while the precision improves by about 7 percentage points, leading to an improvement in the final results. And this phenomenon is more evident in common relations as analyzed in Section \ref{c4}. Finally, applying squared ranking loss in PN learning, PU learning and S-PU learning can further improve the performance of the model, demonstrating the effectiveness of the method with none-class score as an adaptive threshold for document-level RE.

\textbf{Results on ChemDisGene.}
As shown in Table \ref{table4}, the improvement of our method agrees with the results on DocRED, reaching the state-of-the-art F1, which is 5.83 F1 points higher than the original ATLOP. Notice that the improvement on ChemDisGene is not as dramatic as that on DocRED. We argue that this may be due to the fact that some of the documents in the extra annotated \emph{All relationships} test set are from another corpus DrugProt \citep{miranda2021overview}, and that the annotation by human experts has a large deviation from the original training set distribution. This suggests that it is a challenging direction to make the document-level RE model more generalizable when it is difficult to estimate the true distribution of the test set.

\begin{table}
\centering
\begin{tabular}{lcc}
\hline \textbf{Model} & \textbf{Ign F1} & \textbf{F1} \\
\hline ATLOP+BERT$_{Base}^{\ast}$ & $72.70$ & $73.47$ \\
SSR-PU+BERT$_{Base}$ & $\mathbf{72.91}$ & $\mathbf{74.33}$ \\
\hline ATLOP+RoBERTa$_{Large}^{\ast}$ & $76.92$ & $77.58$ \\
DocuNET+RoBERTa$_{Large}^{\dagger}$ & $77.27$ & $77.92$ \\
KD-DocRE+RoBERTa$_{Large}^{\dagger}$ & $77.63$ & $78.35$ \\
SSR-PU+RoBERTa$_{Large}$ & $\mathbf{77.67}$ & $\mathbf{78.86}$ \\
\hline
\end{tabular}
\caption{\label{table2}
Results on Re-DocRED revised test set under the fully supervised setting. Results with $\dagger$ are reported from \citep{tan2022revisiting}. Results with $\ast$ are based on our implementation.
}
\end{table}

\begin{table}
\centering
\begin{tabular}{lcc}
\hline \textbf{Model} & \textbf{Ign F1} & \textbf{F1} \\
\hline ATLOP+BERT$_{Base}^{\ast}$ & $16.99$ & $17.01$ \\
SSR-PU+BERT$_{Base}$ & $\mathbf{46.47}$ & $\mathbf{47.24}$ \\
\hline ATLOP+RoBERTa$_{Large}^{\ast}$ & $17.29$ & $17.31$ \\
SSR-PU+RoBERTa$_{Large}$ & $\mathbf{48.98}$ & $\mathbf{49.74}$ \\
\hline
\end{tabular}
\caption{\label{table3}
Results on Re-DocRED revised test set under the extremely unlabeled setting. Results with $\ast$ are based on our implementation.
}
\end{table}

\subsection{Different Levels of Labeling}

\textbf{Fully supervised setting.} \enspace In this setting, we set $\pi_{i}=\pi_{labeled,i}$ and other hyper-parameters identically. As shown in Table \ref{table2}, we use the \citep{tan2022revisiting} revised Re-DocRED dataset in the same fully supervised setting to compare with the current state-of-the-art baseline models ATLOP \citep{Zhou_Huang_Ma_Huang_2021}, DocuNET \citep{ijcai2021p551} and KD-DocRE \citep{tan-etal-2022-document}. Our method achieves the same state-of-the-art results, demonstrating the effectiveness of our method with full labeling. The result with this setting can be seen as an upper bound for document-level RE with incomplete labeling. More details of the experiment are shown in Appendix \ref{c2}.

\textbf{Extremely unlabeled setting.} \enspace In this setting, we use the original training set of DocRED to construct an extremely unlabeled training set, i.e., the number of labels for each relation type in the document being limited to 1. The average number of relations in the processed documents is reduced to 5.4. We consider this a more difficult and challenging scenario. We set $\pi_{i}=12\pi_{labeled,i}$ and other hyper-parameters identically. As shown in Table \ref{table3}, traditional supervised learning methods fail, while our proposed SSR-PU method still yields a robust result. It is worth noting that since the labeled sample is only a fraction of the true positive sample, i.e., the biased distribution, which means $p(x \mid y_{i}=1)$ is not equal to $p(x \mid s_{i}=1)$, the first term in Eq.\ref{eq13} is actually a biased approximation to the first term in Eq.\ref{eq11}. We consider this bias as one of the bottlenecks of the current method and the main reason why the method degrades a lot in extremely unlabeled scenarios, i.e., the bias is widened in extremely unlabeled scenarios. This is a good direction for future research, where possible solutions might involve adding some data augmentation or bootstrapping methods for labeling to alleviate this bias. More details of the experiment are shown in Appendix \ref{c3}.

\begin{table}
\centering
\begin{tabular}{lccc}
\hline \textbf{Model} & \textbf{Freq. F1} & \textbf{Freq. P} & \textbf{Freq. R} \\
\hline
SR-PN & $60.79$ & $87.83$ & $46.49$ \\
SR-PU & $62.43$ & $60.28$ & $64.74$ \\
SSR-PU & $64.88$ & $68.36$ & $61.74$ \\
\hline
\end{tabular}
\caption{\label{table_common1}
Results for the 10 most common relation types on Re-DocRED test set under the $\mathrm{BERT}_{Base}$ setting.
}
\end{table}

\begin{table}
\centering
\begin{tabular}{lccc}
\hline \textbf{Model} & \textbf{Freq. F1} & \textbf{Freq. P} & \textbf{Freq. R} \\
\hline
SR-PN & $47.62$ & $71.76$ & $35.64$ \\
SR-PU & $47.65$ & $44.35$ & $51.48$ \\
SSR-PU & $50.91$ & $52.09$ & $49.78$ \\
\hline
\end{tabular}
\caption{\label{table_common2}
Results for the 5 most common relation types on ChemDisGene \emph{All relationships} test set. 
}
\end{table}

\subsection{Additional Analysis}

\textbf{Analysis of common relations.} \enspace As shown in table \ref{table_common1} and table \ref{table_common2}, we show the results for common relations on DocRED and ChemDisGene, these frequent relation types account for about 60\% of the relation triples \citep{tan2022revisiting,zhang-etal-2022-distant}. It can be seen that the SR-PU method has a slightly higher recall and much lower precision, which corresponds to an overestimation of the positive sample size in the unlabeled data. The SSR-PU method, on the other hand, can alleviate this problem well, contributing to a better balance among precision and recall and better performance. This indicates a large amount of prior shift in common relations, which is consistent with \citep{huang-etal-2022-recommend} observation that common relations are more likely to be labeled in the dataset.

\textbf{Comparison with other loss functions.} \label{c4} \enspace We compare the squared loss with the log-sigmoid loss, which is commonly used in multi-label classification at the document-level RE. And again, this loss function is rewritten into a none-class ranking form for further comparison with squared ranking loss. The details of the loss function are listed in Appendix \ref{appendix5}. As shown in Table \ref{table6}, both the squared loss function and the squared ranking loss function are significantly improved compared to the other loss functions, which demonstrates the effectiveness of our proposed loss function in the multi-label document-level RE task.

\section{Conclusion and Future Work}

\begin{table}
\centering
\begin{tabular}{lcc}
\hline \textbf{Model} & \textbf{Ign F1} & \textbf{F1} \\
\hline
S-PU$_{log-sigmoid}$ & $52.23$ & $53.43$ \\
S-PU$_{squared}$ & $54.00$ & $55.01$ \\
\hline
S-PU$_{log-sigmoid \; ranking}$ & $52.42$ & $53.66$ \\
SSR-PU & $55.43$ & $56.36$ \\
\hline
\end{tabular}
\caption{\label{table6}
Results on Re-DocRED test set under the $\mathrm{BERT}_{Base}$ setting with different loss functions.
}
\end{table}

In this paper, we propose a unified positive-unlabeled learning framework, SSR-PU, which can effectively solve the incomplete labeling of document-level RE. We use PU learning on document-level RE for the first time and introduce a PU learning under prior shift of training data to adapt to different levels of labeling. Also, we propose squared ranking loss, using none-class score as an adaptive threshold. Experiments demonstrate that our method achieves state-of-the-art results with different levels of labeling and provides a robust new baseline for incompletely labeled document-level RE. \enspace In the future, we will consider methods that do not require estimation of priors, allowing generalization to unknown distributions more accurately, as well as addressing the problem of biased distributions with incomplete labeled positive samples and further improving the extraction performance of long-tail relations.

\section*{Limitations}
Regarding the limitations of our proposed method, our method requires an estimation of an overall prior that will affect the final result. In a realistic scenario, a very accurate prior estimation may be difficult to obtain. In addition, the biased distribution caused by the incomplete labeling of positive samples is one of the bottlenecks of the current method, and there is still much left to be improved for extremely unlabeled scenarios and scenarios where the gap between the test set and the training set distribution is too large, which can be a direction for further research. However, for now, we believe that our task is a valuable contribution to advancing the application of document-level RE in more realistic scenarios and provides a robust baseline for this direction.

\section*{Acknowledgements}
We sincerely thank all anonymous reviewers
for their valuable comments to improve our work. This research is funded by the Basic Research Project of Shanghai Science and Technology Commission (No.19JC1410101). The computation is supported by ECNU Multifunctional Platform for Innovation (001).

\bibliography{anthology,custom}
\bibliographystyle{acl_natbib}


\appendix

\section{Appendix}
\label{sec:appendix}

\begin{table*}
\centering
\begin{tabular}{lcccc}
\hline \textbf{Model} & \multicolumn{2}{c}{\textbf{Dev}} & \multicolumn{2}{c}{\textbf{Test}} \\
& \textbf{Ign F1} & \textbf{F1} & \textbf{Ign F1} & \textbf{F1} \\
\hline ATLOP+BERT$_{Base}^{\ast}$ & $73.12 \pm 0.35$ & $73.93 \pm 0.38$ & $72.70 \pm 0.23$ & $73.47 \pm 0.25$ \\
SSR-PU+BERT$_{Base}$ & $\mathbf{73.27} \pm \mathbf{0.19}$ & $\mathbf{74.69} \pm \mathbf{0.20}$ & $\mathbf{72.91} \pm \mathbf{0.23}$ & $\mathbf{74.33} \pm \mathbf{0.20}$ \\
\hline ATLOP+RoBERTa$_{Large}^{\ast}$ & $76.98 \pm 0.20$ & $77.68 \pm 0.21$ & $76.92 \pm 0.15$ & $77.58 \pm 0.16$ \\
DocuNET+RoBERTa$_{Large}^{\dagger}$ & $77.53$ & $78.16$ & $77.27$ & $77.92$ \\
KD-DocRE+RoBERTa$_{Large}^{\dagger}$ & $\mathbf{77.92}$ & $78.65$ & $77.63$ & $78.35$ \\
SSR-PU+RoBERTa$_{Large}$ & $77.44 \pm 0.25$ & $\mathbf{78.66} \pm \mathbf{0.23}$ & $\mathbf{77.67} \pm \mathbf{0.25}$ & $\mathbf{78.86} \pm \mathbf{0.23}$ \\
\hline
\end{tabular}
\caption{\label{table_full}
Results on revised Re-DocRED under the fully supervised setting. Results with $\dagger$ are reported from \citep{tan2022revisiting}. Results with $\ast$ are based on our implementation.
}
\end{table*}

\subsection{Proof of Theorem 1}
\label{a0}
\begin{proof}
Based on the fact that $p_{\mathrm{U}_{i}}(\boldsymbol{x})=
\pi_{u,i}p_{\mathrm{P}_{i}}(\boldsymbol{x})$ $+(1-\pi_{u,i})p_{\mathrm{N}_{i}}(\boldsymbol{x})$, $(1-\pi_{u,i}) \mathbb{E}_{\mathrm{N}_{i}}[\ell(f_{i}, -1)]$ can be alternatively expressed as $\mathbb{E}_{\mathrm{U}_{i}}[\ell(f_{i}, -1)]-\pi_{u,i} \mathbb{E}_{\mathrm{P}_{i}}[\ell(f_{i}, -1)]$. We can rewrite $R_{\mathrm{ori}}(f)$ as follows:
\begin{equation}
\begin{aligned}\label{eq12}
R_{\mathrm{ori}}(f)=& \sum_{i=1}^{K}(\pi_{i} \mathbb{E}_{\mathrm{P}_{i}}[\ell(f_{i}, +1)] \\
&+(1-\pi_{i}) \mathbb{E}_{\mathrm{N}_{i}}[\ell(f_{i}, -1)]) \\
=& \sum_{i=1}^{K}(\pi_{i} \mathbb{E}_{\mathrm{P}_{i}}[\ell(f_{i}, +1)] \\
&+\frac{1-\pi_{i}}{1-\pi_{u,i}}(\mathbb{E}_{\mathrm{U}_{i}}[\ell(f_{i}, -1)]\\&-\pi_{u,i} \mathbb{E}_{\mathrm{P}_{i}}[\ell(f_{i}, -1)])) \\
=& R_{\mathrm{S-PU}}(f).
\end{aligned}
\end{equation}
We conclude that $R_{\mathrm{ori}}(f)=R_{\mathrm{S-PU}}(f)$.
\end{proof}

\subsection{Proof of Theorem 2}
\label{a1}
\begin{proof}
Let $\Delta_{i} = \mathrm{P}(y_{i}=1 \mid x)$ be the marginal probability when the i-th label is positive. The conditional risk of $\ell_{\mathrm{SR}}$ is:
\begin{equation}
\begin{aligned}\label{eq17}
R_{\ell_{\mathrm{SR}}}&(\mathrm{P}, \boldsymbol{f})=\sum_{i=1}^{K}(\Delta_{i}\frac{1}{4}((f_{i}-f_{0})-margin)^{2}\\&+(1-\Delta_{i})\frac{1}{4}(-(f_{i}-f_{0})-margin)^{2}).
\end{aligned}
\end{equation}
For $i=1,...,K$, the partial derivative can be computed by
\begin{equation}
\begin{aligned}\label{eq18}
\frac{\partial}{f_{i}}\mathbb{E}&[{\ell_{\mathrm{SR}}}(\mathrm{P}, \boldsymbol{f}) \mid \boldsymbol{x}]=\\&\sum_{i=1}^{K}(\Delta_{i}\frac{1}{2}((f_{i}-f_{0})-margin)+\\&(1-\Delta_{i})\frac{1}{2}((f_{0}-f_{i})-margin)),
\end{aligned}
\end{equation}
since $\ell_{\mathrm{SR}}$ is convex and differentiable, we can obtain the optimal $\boldsymbol{f}^{*}$ by setting the partial derivatives to zero, which leads to
\begin{equation}
\begin{aligned}\label{eq19}
f_{i}^{*}-f_{0}^{*}=2\Delta_{i} margin-margin, i=1,...,K.
\end{aligned}
\end{equation}
When $margin \neq 0$, for the optimal score function $\boldsymbol{f}^{*}$, $f_{i}^{*}>f_{0}^{*}$ if and only if $\Delta_{i}>\frac{1}{2}$, which minimizes the $\ell_{\mathrm{SR}}$ risk according to Eq.\ref{eq16}. Therefore, $\ell_{\mathrm{SR}}$ is Bayes consistent w.r.t. $L_{\mathrm{NA}}$.
\end{proof}

\subsection{Results under the Fully Supervised Setting}
\label{c2}

The detailed results under the fully supervised setting are shown in Table \ref{table_full}. We report the mean and standard deviation on the validation and test set by conducting 5 runs with different random seeds (62, 63, 64, 65, 66).

\subsection{Results under the Extremely Unlabeled Setting}
\label{c3}

The detailed results under the extremely unlabeled setting are shown in Table \ref{table_ext}. We report the mean and standard deviation on the test set by conducting 5 runs with different random seeds (62, 63, 64, 65, 66).

\begin{table*}
\centering
\begin{tabular}{lcccc}
\hline \textbf{Model} & \textbf{Ign F1} & \textbf{F1} & \textbf{P} & \textbf{R} \\
\hline ATLOP+BERT$_{Base}^{\ast}$ & $16.99 \pm 0.24$ & $17.01 \pm 0.24$ & $\mathbf{93.17} \pm \mathbf{0.48}$ & $9.36 \pm 0.14$ \\
SSR-PU+BERT$_{Base}$ & $\mathbf{46.47} \pm \mathbf{0.21}$ & $\mathbf{47.24} \pm \mathbf{0.23}$ & $59.52 \pm 0.87$ & $\mathbf{39.18} \pm \mathbf{0.61}$ \\
\hline ATLOP+RoBERTa$_{Large}^{\ast}$ & $17.29 \pm 0.28$ & $17.31 \pm 0.28$ & $\mathbf{94.85} \pm \mathbf{0.19}$ & $9.52 \pm 0.17$ \\
SSR-PU+RoBERTa$_{Large}$ & $\mathbf{48.98} \pm \mathbf{0.30}$ & $\mathbf{49.74} \pm \mathbf{0.30}$ & $61.57 \pm 1.34$ & $\mathbf{41.75} \pm \mathbf{0.42}$ \\
\hline
\end{tabular}
\caption{\label{table_ext}
Results on Re-DocRED revised test set under the extremely unlabeled setting. Results with $\ast$ are based on our implementation.
}
\end{table*}

\subsection{Details of Other Loss Functions}
\label{appendix5}
We first show the convex loss function log-sigmoid loss, which is commonly used in classification task:

\begin{equation}
\begin{aligned}\label{eq20}
\ell_{LS}(f_{i}, y_{i})=-log(\sigma(y_{i}f_{i})),
\end{aligned}
\end{equation}
where $\sigma(x)$ is the sigmoid function.

Since log-sigmoid loss is convex and differentiable, we can obtain its none-class ranking form.

Log-sigmoid ranking loss:
\begin{equation}
\begin{aligned}\label{eq22}
\ell_{LSR}(f_{i}, y_{i})=-log(\sigma(y_{i}(f_{i}-f_{0}))).
\end{aligned}
\end{equation}

This ranking loss function remain Bayesian consistent with $L_{\mathrm{NA}}$ (Eq.\ref{eq14}).

\begin{table}
\centering
\begin{tabular}{lcc}
\hline \textbf{Model} & \textbf{Ign F1} & \textbf{F1} \\
\hline
SSR-PU$_{margin=0}$ & $0.18$ & $0.20$ \\
SSR-PU$_{margin=0.1}$ & $55.76$ & $56.81$ \\
SSR-PU$_{margin=0.25}$ & $55.43$ & $56.36$ \\
SSR-PU$_{margin=0.5}$ & $55.27$ & $56.19$ \\
SSR-PU$_{margin=1.0}$ & $54.25$ & $55.24$ \\
\hline
\end{tabular}
\caption{\label{table8}
Results on Re-DocRED revised test set under the $\mathrm{BERT}_{Base}$ setting with different $margin$.
}
\end{table}

\subsection{Sensitivity to Hyper-Parameter \emph{margin}}

As shown in Table \ref{table8}, the model fail to train when $margin=0$, and the model is insensitive to $margin$ when $margin \neq 0$. This is consistent with our proof.

\begin{table}
\centering
\begin{tabular}{lccc}
\hline \textbf{Model} & \textbf{F1} & \textbf{P} & \textbf{R} \\
\hline
SSR-PU$_{\pi_{i}=2\pi_{labeled,i}}$ & $55.44$ & $78.98$ & $42.71$ \\
SSR-PU$_{\pi_{i}=3\pi_{labeled,i}}$ & $56.36$ & $70.53$ & $46.93$ \\
SSR-PU$_{\pi_{i}=4\pi_{labeled,i}}$ & $54.74$ & $61.45$ & $49.35$ \\
\hline
\end{tabular}
\caption{\label{table9}
Results on Re-DocRED revised test set under the $\mathrm{BERT}_{Base}$ setting with different $\pi_{i}$ estimation.
}
\end{table}

\subsection{Influence of Prior Estimation}
As shown in Table \ref{table9}, the experimental results with different $\pi_{i}$ show that our method is insensitive to the estimation of $\pi_{i}$. Smaller estimates of $\pi_{i}$ lead to higher precision rates as well as lower recall rates, while the opposite is true for higher estimates of $\pi_{i}$.

\end{document}